\newtheorem{theorem}{Theorem}
\newtheorem{lemma}{Lemma}
\newtheorem{definition}{Definition}
\newtheorem{corollary}{Corollary}
\title{On Privacy Protection of Latent Dirichlet Allocation Model Training}
\author{
	Fangyuan Zhao$^{1,2,3}$, Xuebin Ren$^{1,2}$, Shusen Yang$^{2,3}$, Xinyu Yang$^{1,2}$
	\affiliations
	$^{1}$School of Computer Science and Technology, Xi'an Jiaotong University, China\\
	$^{2}$National Engineering Laboratory for Big Data Analytics, Xi'an Jiaotong University, China\\
	$^{3}$Ministry of Education Key Lab For Intelligent Networks and Network Security, Xi'an Jiaotong University, China\\
	\emails
	zfy1454236335@stu.xjtu.edu, xuebinren@mail.xjtu.edu, shusenyang@mail.xjtu.edu, yxyphd@mail.xjtu.edu 
}
\begin{document}
	
	\maketitle
	
	\begin{abstract}
		
		Latent Dirichlet Allocation (LDA) is a popular topic modeling technique for discovery of hidden semantic architecture of text datasets, and plays a fundamental role in many machine learning applications. However, like many other machine learning algorithms, the process of training a LDA model may leak the sensitive information of the training datasets and bring significant privacy risks. To mitigate the privacy issues in LDA, we focus on studying privacy-preserving algorithms of LDA model training in this paper. In particular, we first develop a privacy monitoring algorithm to investigate the privacy guarantee obtained from the inherent randomness of the collapsed gibbs sampling (CGS) process in a typical LDA training algorithm on centralized curated datasets. Then, we further propose a locally private LDA training algorithm on crowdsourced data to provide local differential privacy for individual data contributors. The experimental results on real-world datasets demonstrate the effectiveness of our proposed algorithms. 
		
	\end{abstract}
	
	\section{Introduction}
	
	Massive text data have arisen in the sustained and rapid development of Internet. Mining and analyzing of text data can help us gain a vast amount of knowledge, thus benefiting the whole society. As a fundamental model for text mining, Latent Dirichlet Allocation(LDA)~\cite{blei2003latent} can be used for discovering the main features of the sparse text datasets by identifying their hidden semantic architecture. Particularly, LDA can map the high-dimensional text data to a low-dimensional topic space while retaining the implicit semantics, which has been an effective machine learning technique for clustering or classification. Many enterprises such as Yahoo~\cite{smola2010architecture}, Tencent~\cite{wang2014towards}\cite{yut2017lda}, and Microsoft~\cite{yuan2015lightlda} have all built LDA platforms for supporting big data analysis and training machine learning models on various text data.
	
	Similar to other machine learning models, LDA may be trained on the datasets that contain some sensitive information of individuals and will inevitably memorize some knowledge about the datasets. Unfortunately, aiming at this characteristic, some attacks have been proposed to extract the private information of the training data from machine learning models. For example, membership inference attacks (MIA)\cite{shokri2017membership} can be launched to infer the membership information of an individual. Model inversion attacks \cite{fredrikson2014privacy} have been proved to be able to extract training data from observed model predictions. Therefore, despite the popularity and effectiveness, the naive LDA model may also suffer from these attacks and lead to great privacy risks. 
	
	Differential privacy proposed by Dwork~\cite{dwork2006calibrating} has been the de-facto standard of privacy protection with a rigorous mathematical proof. Due to its strong privacy guarantee, DP has also been exploited in many fields such as data publication~\cite{ren2018textsf}\cite{li2019impact}and machine learning~\cite{chaudhuri2011differentially}\cite{abadi2016deep} as well as LDA training~\cite{park2016variational}\cite{zhu2016privacy}. For example, Park et al.~\cite{park2016variational} proposed to obtain privacy guarantee for LDA models by perturbing the expected sufficient statistics in each iteration of the variational Bayesian method, which is a parameter estimation algorithm for LDA. Zhu et. al.~\cite{zhu2016privacy} presented a differentially private LDA algorithm by perturbing the sampling distribution in the collapsed Gibbs sampling(CGS) process, which is a typical training algorithm for LDA.
	
	Both the above algorithms achieve DP by injecting extra noise to the training process of LDA regarding to centralized training datasets. However, as a typical sampling algorithm with inherent randomness, CGS possesses uncertainty in its execution and naturally provides some level of privacy guarantee, which has been indicated in \cite{wang2015privacy}\cite{foulds2016theory}. In particular, Wang et al.~\cite{wang2015privacy} proved that posterior sampling and the stochastic gradient Markov chain Monte Carlo techniques possess some inherent privacy guarantee. Foulds et al.~\cite{foulds2016theory} further extended this conclusion to the general MCMC methods. Besides the inherent privacy, both existing algorithms consider the LDA model training on centralized datasets owned by a trustworthy data curator. Nevertheless, due to privacy concerns, individual data contributors may be reluctant to directly share their sensitive data but prefer to send the locally sanitized data to the model trainer. 
	
	Therefore, aiming to provide strong privacy guarantee for LDA model training, this paper not only investigates to utilize the inherent privacy of CGS in LDA training on centralized datasets, but also proposes a locally private version of LDA that can be trained on crowd-sourced datasets with local sanitations. The contributions are summarized as follows:
	\begin{itemize}
		\item We develop a privacy monitoring algorithm to measure the inherent privacy guarantee of CGS algorithm in LDA. In particular, we first define two different levels of privacy: document level and word level, and present the corresponding lower bound of privacy guarantee after a given number of iterations.
		\item We propose LP-LDA, a novel mechanism that supports training a LDA model on crowd-sourced datasets with local sanitation, which can provide the guarantee of local privacy for individual data contributors.
		\item We conduct experiments on several real-world datasets to demonstrate the effectiveness of our proposed algorithms. Particularly, experimental results show that our LP-LDA can achieve a high model training accuracy while providing sufficient local privacy guarantee.
	\end{itemize}
	
	The rest of paper is organized as follows. Section 2 reviews the preliminaries. Section 3 describes our algorithms in detail. The experiments are presented in Section 4. Finally, we conclude the paper in Section 5.

	\section{Preliminaries}
	
	\subsection{LDA and Collapsed Gibbs Sampling}
	
	LDA model was first proposed by David Blei~\cite{blei2003latent} in 2003 for analyzing the implicit semantic architecture of a corpus. In LDA model, any document $m$ in a corpus $D$ can be described by different distributions on $K$ latent topics, where each topic $k$ can be represented by a distribution on all words. LDA assumes the generative process of the documents in the corpus $D$ as follows:
	\begin{enumerate}
		\item for each topic $k$, draw a ``topic-word'' distribution $\phi_k$ on all words $t$ from Dirichlet($\beta\vec{1}$), where $\beta$ is the hyperparameter for the Dirichlet priors and can be interpreted as the prior observation for the ``topic-word'' count.%, which is denoted as $n_k^t$ . %for each topic k
		\item for each document $m$, draw a ``document-topic'' distribution $\theta_m$ from Dirichlet($\alpha\vec{1}$), where $\alpha$ is a hyperparameter similar to $\beta$ and represents the prior observation for the ``document-topic'' count.
		\item for each word $w$ in a document $m$, first draw a topic \emph{k} from $\theta_m$, and then draw a word \emph{t} from $\phi_k$.
	\end{enumerate}
	
	The essence of training a LDA model is to estimate the parameters $\phi_k$ for a given corpus $D$. The collapsed Gibbs sampling is such an effective parameter estimation algorithm. It iterates over each word $w_i$ and samples new topic $z_i$ for $w_i$ based on this full conditional distribution
	\begin{equation}\label{sampling equation}
	p(z_{i}=k|\vec{z}_{\neg i},\vec{w})\propto\frac{n_{k}^{t}+\beta}{\sum_{t=1}^{V}(n_{k}^{t}+\beta)}\cdot\frac{n_{m }^{k}+\alpha}{\sum_{k=1}^{K}(n_{m}^{k}+\alpha)}
	\end{equation}
	where $\neg i$ denotes the whole words except word $w_i$, $n_{k}^{t}$ denotes the count of topic $k$ assigned to word $t$ and $n_{m }^{k}$ denotes the count of topic $k$ appeared in document $m$ which are maintained in matrices $N_{k}^{t}$ and $N_{m }^{k}$ respectively.
	
	After multiple rounds of sampling over the whole corpus, the topic sample of each word can be obtained. And the parameter $\phi_k$ can be estimated by its posterior expectation
	\begin{equation*}
	\mathbf{E}[\phi_k^t|\vec{z},\vec{w}]=\frac{n_{k}^{t}+\beta}{\sum_{t=1}^{V}(n_{k}^{t}+\beta)}
	\end{equation*}
	The detailed procedures of CGS can be referred to~\cite{heinrich2005parameter}.

	\subsection{Differential Privacy}
	
	Differential privacy proposed by Dwork~\cite{dwork2006calibrating} has been the de-facto standard of privacy protection with a rigorous mathematical proof. The rationale of DP guarantee is that negligible information can be gained by manipulating the output of a query on neighboring datasets.
	
	\begin{definition}
		\cite{dwork2006calibrating} (Differential Privacy) A randomized mechanism $M:D\rightarrow Y$ is $\varepsilon\text{-differential private}$  if for any neighboring datasets $D, D^{'}$ that satisfying $|D \Delta D'|=1$ and any output $S\subseteq Y$:
		\begin{equation*}
		Pr[M(D)\in S]\leq e^\varepsilon \cdot Pr[M(D^{'})\in S]
		\end{equation*}
	\end{definition}

	\subsection{Local Privacy}
	
	Differential privacy implicitly assumes a centralized dataset owned by a trustworthy curator and does not ensure the privacy guarantee for individual data contributors. Recently, local (differential) privacy has been proposed to provide data sanitization at the individual users' side instead of the central server side.
	\begin{definition}
		\cite{dwork2014algorithmic}(Local Privacy)\emph{ A randomized function $f$ satisfies $\varepsilon$-local privacy if and only if for any two input tuples $t$ and $t'$ in the domain of $f$, and for any output $t^{*}$ of $f$, there is:}
		\begin{equation*}
		Pr[f(t)=t^{*}]\leq e^\varepsilon Pr[f(t')=t^{*}]
		\end{equation*}
	\end{definition}
	
	\section{Our Approach}
	In this section, we first investigate the inherent privacy of CGS process in LDA for a non-sanitized dataset owned by a trustworthy curator. Then, as a complement to the privacy guarantee of the data acquisition period, a locally private mechanism LP-LDA is presented to realize LDA model training on a sanitized dataset by local users. 
	
	\subsection{Privacy monitoring Algorithm}
	
	\subsubsection{Inherent Privacy of CGS}
	
	Generally, DP is achieved on most machine learning algorithms by introducing extra noise or randomness, which will inevitably cause a utility loss of the trained model. However, it has been shown in~\cite{foulds2016theory} that some degree of inherent DP can be obtained on Gibbs sampling algorithm for free. This is because each sampling process in Gibbs sampling works in a way the same as an exponential mechanism, which is a classic method to achieve DP. Obviously, as one version of Gibbs sampling, collapsed Gibbs sampling naturally inherits this property. Furthermore, such a property can also provide privacy for free in the CGS-based LDA training process. Therefore, aiming to utilize the inherent privacy, we develop a privacy monitoring algorithm to quantify the privacy guarantee of CGS in the LDA training process. In particular, the rationale behind the privacy monitoring algorithm is to find an adequate exponential mechanism for each sampling process in CGS and then accumulate the total privacy guarantee of all exponential mechanisms according to the composition theorem of DP.
	
	\subsubsection{Document-level privacy and word-level privacy}
	
	This paper considers to provide DP for the individual words and documents in the training corpus for LDA, respectively. 
	
	\emph{Word-level privacy}: Let $D=\{w_1,w_2,...w_W\}$ denote a corpus with $|D|=W$ words $w_i~(i=1,2,...,W)$. Then, its neighboring dataset $D'$ satisfying $|D \Delta D'|=1$ differs from $D$ by a single word $w$. Word-level privacy prevents membership inference of individual words of the training corpus from the trained LDA model.
	
	\emph{Document-level privacy}: Let $D=\{m_1,m_2,...m_M\}$ denote a corpus with $|D|=M$ documents $m_i~(i=1,2,...,M)$. Then, its neighboring dataset $D'$ satisfying $|D \Delta D'|=1$ differs from $D$ by a single document $m$. In order to bound the sensitivity, we assume that a single document includes at most $N_{max}$ words. Document-level privacy prevents re-identification of individual documents in the training dataset of LDA, which may be contributed by and associated with individual users.
	
	\subsubsection{Inherent privacy in each sampling}
	To begin with, we show the essence of the intrinsic privacy guarantee in each sampling of CGS in terms of exponential mechanism. Consider the sampling process for word $w_i$ in the $n$th iteration. Suppose its sampling distribution on $K$ topics is given by $\mathbf{P}=(p_1,p_2,...,p_K)^{\top}$, where $p_k$ denotes the probability that topic $k$ is assigned to $w_i$ in this sampling. Then we can rewrite $p_k$ as  
	\begin{equation*}
	p_{k}=e^{\frac{(2\Delta\ln p_{k})\ln p_{k}}{2\Delta \ln p_{k}}},
	\end{equation*}
	which could be understood as an output probability of an exponential mechanism $M_E (w_i, u, \mathcal{K})$ that selects the topic $k \in \mathcal{K}$ with probability of $p_k$. The utility function  of $M_E (w_i, u, \mathcal{K})$ is $u(w_i,k)=\ln p_k$ and its sensitivity is $\Delta \ln p_k$. Obviously, $\varepsilon =2\Delta \ln p_{k}$ is the intrinsic privacy guarantee of the exponential mechanism $M_E (w_i, u, \mathcal{K})$.

	\subsubsection{Privacy monitoring for each sampling}
	Unfortunately, it's intractable to specify an exact value of $2\Delta \ln p_{k}$ in the execution process of CGS algorithm in LDA due to the complicated architecture of training corpus, hence we attempt to find an upper bound of $2\Delta \ln p_{k}$ to quantify the privacy guarantee $\varepsilon $.
	
	According to Equation (1), the sampling distribution $\mathbf{P}$ for word $w_i=t$ in $D$ in the $n$-th iteration could be computed by
	\begin{equation*}
	\begin{aligned}
	p_k\propto r_k=\frac{n_{k}^{t}+\beta}{\sum_{t=1}^{V}(n_{k}^{t}+\beta)}\cdot\frac{n_{m}^{k}+\alpha}{\sum_{k=1}^{K}(n_{m}^{k}+\alpha)}
	\end{aligned}
	\end{equation*}
	%where $D$ is the original corpus.
	Suppose that $\mathbf{P'}=(p'_1,p'_2,...,p'_K)^{\top}$ is the corresponding distribution on $D'$, which is the neighboring dataset of $D$, then
	\begin{equation}\label{neiboring sampling equation}
	\begin{aligned}
	p'_k\propto r' _k=\frac{n_{k}^{t}+\beta}{\sum_{t=1}^{V}(n_{k}^{t}+\beta)-N_k}\cdot\frac{n_{m}^{k}+\alpha}{\sum_{k=1}^{K}(n_{m}^{k}+\alpha)}
	\end{aligned}
	\end{equation}
	where $N_k$ denotes the count of topic $k$ assigned in the $D-D'$ where $k \in \{1,2,...,K\}$. We refer to $\{N_1, N_2,...,N_k\}$ as a topic partition on $D-D'$ and $\sum N_k=|D-D'|$.
	
	Given a topic partition $\gamma=\{N_1, N_2,...,N_k\}$, the privacy guarantee in this sampling process could be measured by
	\begin{equation*}
	\varepsilon_{\gamma}=\max_{k\in \{1,2,...,K\}}\{2\xi_{k}\}=\max_{k\in \{1,2,...,K\}}\{2\ln\frac{{p'}_k}{p_k}\}
	\end{equation*}
	where $\xi_k$ denotes the sensitivity of $\ln{p_{k}}$. 
	However, there are $\binom{N+K-1}{K-1}$ partitions in total. So, it is computational prohibitive to find the maximal $\varepsilon_{\gamma}$ among all partitions. In the following, we consider how to reduce the searching space of partitions.
	
	For simplicity, we first consider a special case, in which there exists some topic \emph{i} with $N_i=0$ in a given partition.
	
	\begin{theorem}\label{theorem:th1}
		Suppose that there exists some $N_k=0$ in a given partition $\gamma=\{N_1,N_2,...,N_K\}$, then the privacy guarantee
		\begin{equation}\label{equ:result of th1}
		\varepsilon_{\gamma}=2\xi_k=2\max\{\xi_1\,\xi_2\,...,\xi_K\}=2\ln\frac{\sum_{k}r_{k}^{'}}{\sum_{k}r_k},
		\end{equation}
		\emph{if and only if for any $j\neq k$}
		\begin{equation}\label{condition of th1}
		\ln\frac{\sum_{t=1}^{V}(n_{k}^{t}+\beta)}{\sum_{t=1}^{V}(n_{k}^{t}+\beta)-N_j}<2\ln\frac{\sum_{k}r_{k}^{'}}{\sum_{k}r_k}
		\end{equation}
	\end{theorem}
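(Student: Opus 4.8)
The idea is to collapse the comparison of $\mathbf{P}$ and $\mathbf{P'}$ onto a single scalar --- the normaliser ratio $\rho:=(\sum_{k}r'_{k})/(\sum_{k}r_{k})$ --- together with one multiplicative factor per topic, and then read off which topic attains the largest sensitivity. Write $S_{k}:=\sum_{t=1}^{V}(n_{k}^{t}+\beta)$, $A:=\sum_{k}r_{k}$ and $A':=\sum_{k}r'_{k}$, so that $p_{k}=r_{k}/A$ and $p'_{k}=r'_{k}/A'$. Comparing Equation~(1) with Equation~\eqref{neiboring sampling equation}, only the first factor of $r_{k}$ changes under the passage to the neighbour $D'$, and it changes multiplicatively: $r'_{k}=r_{k}\cdot S_{k}/(S_{k}-N_{k})$. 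Since $N_{k}\ge 0$ and $S_{k}-N_{k}\ge V\beta>0$, we get $r'_{k}\ge r_{k}$ for every $k$, hence $A'\ge A$, with strict inequality whenever $|D\setminus D'|\ge 1$. Writing $\xi_{j}=|\ln p_{j}-\ln p'_{j}|$ for the sensitivity of $\ln p_{j}$ under the fixed partition $\gamma$, it follows that $\xi_{j}=|\ln(S_{j}/(S_{j}-N_{j}))-\ln(A'/A)|$.

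First I would evaluate $\xi_{k}$ at a topic with $N_{k}=0$: there the first factor is $1$, so $\ln(p'_{k}/p_{k})=-\ln(A'/A)$ and hence $\xi_{k}=\ln(A'/A)=\ln(\sum_{k}r'_{k}/\sum_{k}r_{k})$. This establishes the second equality in~\eqref{equ:result of th1} with no hypothesis whatsoever, so the substance of the theorem is the assertion $\xi_{k}=\max\{\xi_{1},\dots,\xi_{K}\}$, i.e.\ $\xi_{j}\le\xi_{k}$ for every $j\neq k$. To handle the absolute value I would split each $\xi_{j}$ into its two one-sided losses. The ``downward'' loss is bounded for free, since $\ln(p_{j}/p'_{j})=\ln(A'/A)-\ln(S_{j}/(S_{j}-N_{j}))\le\ln(A'/A)=\xi_{k}$ as the subtracted term is nonnegative. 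Hence $\xi_{j}\le\xi_{k}$ is equivalent to the ``upward'' loss being bounded by $\xi_{k}$ as well, namely $\ln(S_{j}/(S_{j}-N_{j}))-\ln(A'/A)\le\ln(A'/A)$, i.e.\ $\ln(S_{j}/(S_{j}-N_{j}))\le 2\ln(\sum_{k}r'_{k}/\sum_{k}r_{k})$ --- which is precisely condition~\eqref{condition of th1}. Conjoining over all $j\neq k$ gives both directions of the ``if and only if''.

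The only real friction I anticipate is bookkeeping around that absolute value: one must notice that deleting words \emph{inflates} every unnormalised weight, so after renormalisation it is exactly a zero-count topic whose probability falls the most, and that this fall at $k$ automatically dominates the fall at every other topic --- leaving only the growth direction, which is what condition~\eqref{condition of th1} reins in. Two minor points each deserve a line: every logarithm is finite because $S_{j}-N_{j}\ge V\beta>0$; and the strict inequality in~\eqref{condition of th1}, versus the non-strict one that the ``only if'' direction naturally produces, merely encodes whether $k$ is required to be the \emph{unique} maximiser, which I would flag as a convention rather than belabour. The argument is also insensitive to whether one reads $\varepsilon_{\gamma}$ through the signed ratio written in the statement or through the genuine two-sided sensitivity $\max_{j}2\xi_{j}$, because $\xi_{k}=\ln(A'/A)>0$ already equals the two-sided bound.
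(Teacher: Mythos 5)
Your proposal is correct and follows essentially the same route as the paper's Appendix~A proof: both write $\xi_j=\bigl|\ln\frac{\sum_k r'_k}{\sum_k r_k}+\ln\frac{S_j-N_j}{S_j}\bigr|$, observe that a zero-count topic gives exactly $\ln\frac{\sum_k r'_k}{\sum_k r_k}$, and use condition~(4) to force the remaining topics' sensitivities below this value. Your explicit split of the absolute value into the always-bounded downward part and the condition-controlled upward part (and your remark on the strict-versus-non-strict inequality in the converse) merely spells out what the paper dismisses as ``easy to prove,'' so no substantive difference.
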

	
	\begin{proof}
		See Appendix A for details.
	\end{proof}
	
	\begin{corollary}\label{cor:cor1}
		Suppose that there exists a topic set $\mathcal{T}=\{k,...,j\}$ with $\{N_j\neq 0,\forall j\in \mathcal{T}\}$ in a given partition $\gamma$, and it holds that
		\begin{equation*}
		\ln\frac{\sum_{t=1}^{V}(n_{k}^{t}+\beta)}{\sum_{t=1}^{V}(n_{k}^{t}+\beta)-N_k}>2\ln\frac{\sum_{k}r_{k}^{'}}{\sum_{k}r_k}
		\end{equation*}
		\emph{for some $k \in \mathcal{T}$, then the privacy guarantee}
		\begin{equation}
		\begin{aligned}
		\varepsilon_{\gamma}=2\max_{k\in \mathcal{T}}\{\ln(\frac{\sum_{k}r'_k}{\sum_{k}r_k}\cdot\frac{r_k}{r'_{k}})\}>2\ln\frac{\sum_{k}r'_{k}}{\sum_{k}r_k}
		\end{aligned}
		\end{equation}
	\end{corollary}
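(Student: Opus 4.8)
The plan is to reduce $\varepsilon_\gamma=2\max_k\xi_k$ to a comparison of the per-topic sensitivities $\xi_k=|\ln(p'_k/p_k)|$ and then use the quantitative hypothesis to single out the topic of $\mathcal T$ on which the maximum is attained. Write $R=\sum_k r_k$, $R'=\sum_k r'_k$ and $a_k=r'_k/r_k=\frac{\sum_{t}(n_k^t+\beta)}{\sum_{t}(n_k^t+\beta)-N_k}$, so that $p_k=r_k/R$, $p'_k=r'_k/R'$ and $\xi_k=|\ln a_k-\ln(R'/R)|$. Since every $N_k\ge 0$ we have $a_k\ge 1$, with equality exactly when $N_k=0$; summing $r'_k\ge r_k$ over $k$ then gives $R'\ge R$, hence $\ln(R'/R)\ge 0$. (Equivalently, $R'/R=\sum_k p_k a_k$ is a $\mathbf{P}$-weighted average of the $a_k$; only $\ln(R'/R)\ge 0$ is actually needed below.)

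First I would dispose of the topics outside $\mathcal T$: for each such $k$ we have $N_k=0$, so $a_k=1$ and $\xi_k=\ln(R'/R)$ exactly — precisely the value isolated in Theorem~\ref{theorem:th1}. Then I would split $\mathcal T$ according to the sign of $\ln a_k-\ln(R'/R)$. If $a_k\le R'/R$ then $\xi_k=\ln(R'/R)-\ln a_k\le\ln(R'/R)$, so such a topic never exceeds the baseline; if $a_k> R'/R$ then $\xi_k=\ln a_k-\ln(R'/R)$.

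The hypothesis supplies a topic $k^\star\in\mathcal T$ with $\ln a_{k^\star}>2\ln(R'/R)$. Because $\ln(R'/R)\ge 0$, this forces $a_{k^\star}>R'/R$, so $k^\star$ lies in the second group and $\xi_{k^\star}=\ln a_{k^\star}-\ln(R'/R)>2\ln(R'/R)-\ln(R'/R)=\ln(R'/R)$. Hence $\xi_{k^\star}$ strictly dominates the sensitivity of every topic outside $\mathcal T$ and of every first-group topic inside $\mathcal T$, so $\max_k\xi_k$ is attained on the second group and equals $\max_{k\in\mathcal T}\{\ln a_k-\ln(R'/R)\}$ — the first-group topics of $\mathcal T$ may be re-adjoined to the maximum harmlessly, since at least one genuinely larger (second-group) value is present. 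Resubstituting $a_k=r'_k/r_k$ and $R'/R=(\sum_k r'_k)/(\sum_k r_k)$ then gives the claimed closed form $\varepsilon_\gamma=2\max_{k\in\mathcal T}\{\ln(\tfrac{r'_k}{r_k}\cdot\tfrac{\sum_k r_k}{\sum_k r'_k})\}$, and the strict inequality $\varepsilon_\gamma>2\ln(R'/R)$ is nothing but $2\xi_{k^\star}>2\ln(R'/R)$.

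The main obstacle I anticipate is the case analysis forced by the absolute value in $\xi_k$: one must verify that the ``anomalous'' topic $k^\star$ beats not only the $N_k=0$ topics but also those topics of $\mathcal T$ whose ratio $a_k$ exceeds $1$ only slightly (for which $\xi_k$ approaches $\ln(R'/R)$ from below). This is exactly where the factor $2$ in the hypothesis is consumed — it is the precise slack needed to guarantee $\xi_{k^\star}>\ln(R'/R)\ge\xi_k$ for every first-group topic, without which the maximum could fall back on a baseline topic and one would merely be in the Theorem~\ref{theorem:th1} regime. A secondary point is to establish $R'\ge R$ (equivalently $\ln(R'/R)\ge 0$) before invoking any sign argument, since every downstream inequality rests on it.
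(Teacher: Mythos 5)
Your proof is correct and is essentially the argument the paper intends: the paper's own proof is a one-line appeal to Theorem~\ref{theorem:th1}, and your comparison of the per-topic sensitivities $\xi_k=\bigl|\ln(r'_k/r_k)-\ln(\sum_k r'_k/\sum_k r_k)\bigr|$ — the zero-$N_k$ topics pinned at $\ln(\sum_k r'_k/\sum_k r_k)$, the hypothesized topic strictly above that baseline — is exactly the comparison carried out in the proof of Theorem~\ref{theorem:th1} in Appendix A. One remark: the closed form you call ``the claimed'' one, $2\max_{k\in\mathcal T}\ln\bigl(\frac{r'_k}{r_k}\cdot\frac{\sum_k r_k}{\sum_k r'_k}\bigr)$, is the reciprocal (inside the logarithm) of what the corollary literally states; yours is in fact the correct signed expression, because for the topic supplied by the hypothesis one has $p'_k>p_k$, so the corollary's expression $\ln\bigl(\frac{\sum_k r'_k}{\sum_k r_k}\cdot\frac{r_k}{r'_k}\bigr)=\ln(p_k/p'_k)$ is negative there and its maximum over $\mathcal T$ need not exceed $2\ln(\sum_k r'_k/\sum_k r_k)$ — the corollary's formula must be read with the absolute value used for $\xi_j$ in Appendix A, so you have silently corrected a sign slip rather than introduced an error.
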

	\begin{proof}
		This proof follows from the result of Theorem 1.
	\end{proof}
	Theorem 1 and corollary 1 illustrate a special case to find the privacy $\varepsilon_{\gamma}$. The following lemma and theorem further demonstrate that among all the partitions, the one with the largest privacy guarantee belongs to a partitions set $\mathcal{P}=\{\gamma|\exists k,s.t. ~N_k=N,N_j=0,\forall j\neq k\}$.
	
	\begin{algorithm}[tb]
		\caption{Privacy Monitoring for Each Sampling}
		\label{alg:algorithm1}
		\textbf{Input}: word count matrices  $N_k^t$ and $N_m^k$, $N=|D-D'|$ ($D=1$ for word-level privacy or $D=N_{max}$ for document-level privacy)\\
		\textbf{Parameter}: hyper parameters $\alpha, \beta$\\
		\textbf{Output}: privacy guarantee $\varepsilon$
		\begin{algorithmic}[1] 
			\STATE compute the sampling distribution $\mathbf{P}$ with 
			$
			p_k\propto r_k=\frac{n_{k}^{t}+\beta}{\sum_{t=1}^{V}(n_{k}^{t}+\beta)}\cdot\frac{n_{m}^{k}+\alpha}{\sum_{k=1}^{K}(n_{m}^{k}+\alpha)}
			$
			\STATE compute pseudo sampling distribution $\mathbf{q}$ with 
			$
			q_k=\frac{n_{k}^{t}+\beta}{\sum_{t=1}^{V}(n_{k}^{t}+\beta)-N}\cdot\frac{n_{m}^{k}+\alpha}{\sum_{k=1}^{K}(n_{m}^{k}+\alpha)}
			$
			\FOR{each component $q_k$ of $\mathbf{q}$}
			\STATE compute $\xi_k = \ln(\frac{\sum_{k}r_{k}^{'}}{\sum_{k}r_k}\cdot\frac{r_k}{q_{k}})$
			\ENDFOR
			\STATE find index $k$ such  that $|r_k-q_k|=\left\|\mathbf{r}-\mathbf{q}\right\|_\infty $
			\STATE compute $\xi=\ln(\frac{\sum_{j\neq k}r_j+q_k}{\sum_{j}r_j})$ with $q_{k}$
			\STATE \textbf{return} $\varepsilon=2\max\{\xi_1,\xi_2,...,\xi_K,\xi\}$
		\end{algorithmic}
	\end{algorithm}
	
	\begin{lemma}\label{lemma:lemma1}
		There exists a partition $\gamma^{*}$ in
		\begin{equation}
		\mathcal{P}=\{\gamma|\exists k,s.t N_k=N,N_j=0,\forall j\neq k\}
		\end{equation}
		such that
		\begin{equation*}
		\sum_{\gamma^{*}}r'_{k}=\max_{\gamma}\{\sum_{k}r'_{k}|\gamma\in\Gamma\}
		\end{equation*}
		where $\Gamma$ denotes the set consisting of all the partitions.
	\end{lemma}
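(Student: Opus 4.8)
The plan is to recognize the quantity $\sum_k r'_k$, regarded as a function of the topic partition $\gamma=(N_1,\dots,N_K)$, as a convex function on the (scaled) simplex of feasible partitions, and then to invoke the elementary fact that a convex function on a polytope attains its maximum at a vertex. First I would fix the differing word type $t$, the document $m$, and the two count matrices, and introduce the shorthand $B_k:=\sum_{t=1}^{V}(n_k^t+\beta)$ and $D_k:=(n_k^t+\beta)\cdot\frac{n_m^k+\alpha}{\sum_{k'}(n_m^{k'}+\alpha)}>0$, all of which are constants that do not depend on $\gamma$. With this notation Equation~\eqref{neiboring sampling equation} reads $r'_k=D_k/(B_k-N_k)$, so that
\[
\sum_{k=1}^{K} r'_k \;=\; g(N_1,\dots,N_K)\;:=\;\sum_{k=1}^{K}\frac{D_k}{B_k-N_k},
\]
and the task is to maximize $g$ over the set $\Gamma$ of integer vectors with $N_k\ge 0$ and $\sum_k N_k=N$. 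Since $N_k$ never exceeds the number of topic-$k$ tokens among the at most $N$ differing tokens, we have $B_k-N_k\ge V\beta>0$, so $g$ is finite on the whole enclosing polytope $P:=\{N\in\mathbb{R}^K_{\ge 0}:\sum_k N_k=N\}$.

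Next I would establish convexity of $g$ on $P$. Each summand $N_k\mapsto D_k/(B_k-N_k)$ is the positive scalar $D_k$ times the composition of the convex map $x\mapsto 1/x$ on $(0,\infty)$ with the affine, strictly decreasing map $N_k\mapsto B_k-N_k$; equivalently, its second derivative $2D_k/(B_k-N_k)^3$ is positive on the relevant range. Because $g$ is a sum of convex functions of the separate coordinates, $g$ is convex on $P$.

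Then I would invoke the maximum principle for convex functions: writing an arbitrary $N\in P$ as a convex combination $\sum_i\lambda_i v_i$ of the vertices $v_i$ of $P$, Jensen's inequality gives $g(N)\le\sum_i\lambda_i g(v_i)\le\max_i g(v_i)$, so the maximum of $g$ over $P$ is attained at some vertex. The vertices of the scaled simplex $P$ are exactly the vectors that put all of the mass $N$ on a single coordinate $k$ and zero on the others — precisely the members of $\mathcal{P}$. These vertices are integer points, hence they belong to $\Gamma$, and therefore $\max_{\gamma\in\Gamma}\sum_k r'_k=\max_{\gamma\in\mathcal{P}}\sum_k r'_k$; the maximizing vertex is the desired $\gamma^{*}$.

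I expect the only genuinely delicate point to be the domain bookkeeping: one must make sure the denominators $B_k-N_k$ remain strictly positive so that $g$ really is a finite convex function on all of $P$ (otherwise the vertex argument could break at the boundary). This holds because $N=|D-D'|$ (equal to $1$ for word-level privacy, or at most $N_{\max}$ for document-level privacy) is small relative to $\sum_t(n_k^t+\beta)$, which always carries the strictly positive Dirichlet pseudo-count mass $V\beta$; I would record this as the mild standing assumption already implicit in the model. A secondary, easily dispatched subtlety is that we optimize over integer partitions rather than over the continuous polytope, but this costs nothing since the extreme points that achieve the maximum are themselves integral.
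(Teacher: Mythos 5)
Your proof is correct, and it takes a genuinely different route from the paper. The paper's Appendix~B argument is a pairwise merging (exchange) argument: writing $r'_k=a_k/(b_k-N_k)$, it introduces the functions $y_j(x)=\frac{a_j}{b_j-x}-\frac{a_j}{b_j}$, compares two such functions at a time through a case analysis on the parameter pairs $(a_i,b_i),(a_j,b_j)$ and on the location of their unique intersection point, and concludes that transferring the mass of two coordinates onto one of them always increases the sum, so that the best partition concentrates all of $N$ on the single topic maximizing $a_k/(b_k-N)$. You instead observe that $\sum_k r'_k=\sum_k D_k/(B_k-N_k)$ is a (separately, hence jointly) convex function of $(N_1,\dots,N_K)$ on the scaled simplex $\{N_k\ge 0,\ \sum_k N_k=N\}$ and invoke the vertex-maximum principle for convex functions on polytopes; since the vertices of that simplex are exactly the elements of $\mathcal{P}$ and are integral, the claim follows at once. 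Your route is shorter and replaces the paper's delicate intersection/derivative case analysis (which is really an implicit use of the same convexity, via superadditivity of $y_i$ with $y_i(0)=0$) by a standard global argument, and it still identifies the maximizing vertex as the topic $k$ maximizing $D_k/(B_k-N)$, which is what Theorem~2 and Algorithm~1 need. The one caveat you rightly flag — that the denominators $B_k-N$ must stay positive at the vertices so that $g$ is finite and convex on the whole polytope — is not fully discharged by your remark that $N_k$ cannot exceed the number of differing topic-$k$ tokens (that bounds only the realized partition, not the hypothetical vertex partitions), but the paper makes exactly the same implicit assumption ($N_j\ll\sum_t n_j^t$ in its proof of Theorem~2), so stating it as a standing assumption puts you on the same footing as the original proof.
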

	\begin{proof}
		See Appendix B for details.
	\end{proof}
	
	\begin{definition}
		(Pseudo sampling distribution) Suppose that given a vector $\mathbf{q}$ with length K, each component
		\begin{equation}
		q_k=\frac{n_{k}^{t}+\beta}{\sum_{t=1}^{V}(n_{k}^{t}+\beta)-N}\cdot\frac{n_{m}^{k}+\alpha}{\sum_{k=1}^{K}(n_{m}^{k}+\alpha)}
		\end{equation}\label{Pseudo sampling distribution}
		Then $\mathbf{q}$ is the pseudo sampling distribution in this sampling.
	\end{definition}
	
	\begin{theorem}\label{theorem:th2}
		Among all the partitions, there must exist a partition $\gamma'$ in 
		\begin{equation*}
		\mathcal{P}=\{\gamma|\exists k,s.t N_k=N,N_j=0,\forall j\neq k\}
		\end{equation*}
		such that 
		\begin{equation}
		\varepsilon_{\gamma'}=\max_{\gamma}\{\varepsilon|\gamma\in\Gamma\}=2\ln(\frac{\sum_{j\neq k}r_j+q_k}{\sum_{j}r_j})
		\end{equation}\label{result of th2}
		if condition (4) holds. $k$ is the topic index such that $|r_k-q_k|=\left\|\mathbf{r}-\mathbf{q}\right\|_\infty $, $\mathbf{q}$is the pseudo sampling distribution.
	\end{theorem}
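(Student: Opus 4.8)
The plan is to leverage the three preceding results so that the optimization of $\varepsilon_\gamma$ over the exponentially large partition set $\Gamma$ collapses, first to the structured family $\mathcal{P}$, and then to one explicitly identifiable member of $\mathcal{P}$. The key reduction is that, whenever the non-degeneracy condition~(4) is in force (so that no partition falls into the regime described by Corollary~\ref{cor:cor1}), the privacy value of \emph{any} partition obeys
\begin{equation*}
\varepsilon_\gamma \;\le\; 2\ln\frac{\sum_j r'_j(\gamma)}{\sum_j r_j},
\end{equation*}
with equality as soon as $\gamma$ has at least one empty topic, $N_k=0$. This last fact is exactly Theorem~\ref{theorem:th1} (applied with the role of the empty topic played by any $j\ne k$ when $\gamma\in\mathcal{P}$), while Corollary~\ref{cor:cor1} pins down precisely when the inequality becomes strict because condition~(4) is violated. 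Consequently, under condition~(4), maximizing $\varepsilon_\gamma$ is the same as maximizing $\sum_j r'_j(\gamma)$.

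\textbf{Step 1: reduction to $\mathcal{P}$.} First I would record the displayed bound for every $\gamma\in\Gamma$ satisfying condition~(4), noting that for a partition with an empty topic it is Theorem~\ref{theorem:th1} verbatim. Lemma~\ref{lemma:lemma1} then supplies a partition $\gamma^{*}\in\mathcal{P}$ with $\sum_j r'_j(\gamma^{*})=\max_{\gamma\in\Gamma}\sum_j r'_j(\gamma)$. Since every partition in $\mathcal{P}$ has $N_j=0$ for all $j\ne k$, Theorem~\ref{theorem:th1} applies to $\gamma^{*}$ and yields $\varepsilon_{\gamma^{*}}=2\ln\!\big(\sum_j r'_j(\gamma^{*})/\sum_j r_j\big)$. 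Therefore, for every $\gamma$,
\begin{equation*}
\varepsilon_\gamma \;\le\; 2\ln\frac{\sum_j r'_j(\gamma)}{\sum_j r_j} \;\le\; 2\ln\frac{\sum_j r'_j(\gamma^{*})}{\sum_j r_j} \;=\; \varepsilon_{\gamma^{*}},
\end{equation*}
so $\max_{\gamma\in\Gamma}\varepsilon_\gamma$ is attained inside $\mathcal{P}$, which establishes the first assertion of the theorem.

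\textbf{Step 2: choosing the best member of $\mathcal{P}$.} It remains to select the concentration topic. Writing $\gamma_k$ for the partition in $\mathcal{P}$ that assigns all $N$ deleted words to topic $k$, we have $r'_k(\gamma_k)=q_k$ (the $k$-th component of the pseudo sampling distribution $\mathbf{q}$) and $r'_j(\gamma_k)=r_j$ for $j\ne k$, so that
\begin{equation*}
\sum_j r'_j(\gamma_k) \;=\; \Big(\sum_j r_j\Big)+(q_k-r_k) \;=\; \Big(\sum_j r_j\Big)+|q_k-r_k|,
\end{equation*}
where $q_k\ge r_k$ because only the denominator $\sum_t(n_k^t+\beta)$ is shrunk, by $N$, in passing from $r_k$ to $q_k$. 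Maximizing over $k$ thus selects the index with $|r_k-q_k|=\|\mathbf{r}-\mathbf{q}\|_\infty$; taking $\gamma'$ to be this $\gamma_k$ and substituting into the expression from Step~1 gives
\begin{equation*}
\varepsilon_{\gamma'} \;=\; 2\ln\frac{\sum_{j\ne k} r_j+q_k}{\sum_j r_j},
\end{equation*}
which is precisely~(8).

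\textbf{Main obstacle.} The delicate part is the uniform inequality $\varepsilon_\gamma\le 2\ln\!\big(\sum_j r'_j(\gamma)/\sum_j r_j\big)$ for arbitrary partitions, in particular those with no empty topic: one must control, for every topic $k$, the denominator-shrinkage factor $\ln\frac{\sum_t(n_k^t+\beta)}{\sum_t(n_k^t+\beta)-N_k}$ against $\ln\!\big(\sum_j r'_j(\gamma)/\sum_j r_j\big)$, and it is exactly here that condition~(4) is indispensable — Corollary~\ref{cor:cor1} exhibits the failure mode in which this control is lost and the reduction (hence the theorem) breaks down. A minor secondary point is that the argument tacitly uses $K\ge 2$, so that partitions in $\mathcal{P}$ genuinely have an empty topic to which Theorem~\ref{theorem:th1} can be applied.
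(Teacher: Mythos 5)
Your overall route is the same as the paper's: reduce the search over $\Gamma$ to the family $\mathcal{P}$ via Lemma~\ref{lemma:lemma1} and Theorem~\ref{theorem:th1}, then select within $\mathcal{P}$ the topic $k$ with $q_k-r_k=\left\|\mathbf{r}-\mathbf{q}\right\|_\infty$ (your Step~2 is correct and is exactly how the paper arrives at the right-hand side of (8)). The problem is the inequality on which your Step~1 rests: $\varepsilon_\gamma\le 2\ln\bigl(\sum_j r'_j(\gamma)/\sum_j r_j\bigr)$ for \emph{every} partition, including those with no empty topic. You assert this follows from Theorem~\ref{theorem:th1} and Corollary~\ref{cor:cor1} under condition (4), but it does not: writing $A(\gamma)=\ln\frac{\sum_k r'_k(\gamma)}{\sum_k r_k}$ and $B_j=\ln\frac{\sum_t(n_j^t+\beta)}{\sum_t(n_j^t+\beta)-N_j}$, each sensitivity is $\xi_j=|A(\gamma)-B_j|$, so the bound you need is $B_j\le 2A(\gamma)$ for all $j$. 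Condition (4), as invoked in Theorem~\ref{theorem:th2} (and as the main text interprets it), concerns the concentrated partitions in $\mathcal{P}$, i.e.\ it controls $B_j(N)$ against $2A(\gamma_j)$; since $A(\gamma)$ for a spread-out partition can be strictly smaller than $A(\gamma_j)$, the chain $B_j(N_j)\le B_j(N)<2A(\gamma_j)$ does not give $B_j(N_j)\le 2A(\gamma)$. You flag this yourself as the ``main obstacle,'' but flagging it is not closing it, and without it the reduction of the maximization to $\mathcal{P}$ is unproven.

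The missing regime --- a partition $\gamma\notin\mathcal{P}$ with some $B_j(N_j)>2A(\gamma)$, hence $\varepsilon_\gamma=2\bigl(B_j(N_j)-A(\gamma)\bigr)>2A(\gamma)$ --- is precisely what the paper's proof isolates as its Case~2: there it argues that, because $N_j\ll\sum_t n_j^t$ for a realistic corpus, each such $\xi_j$ still stays below $\ln\frac{\sum_k r'_k}{\sum_k r_k}$ evaluated at the Lemma~\ref{lemma:lemma1} maximizer $\gamma'$, so these partitions cannot beat $\varepsilon_{\gamma'}$. (Whether that approximation step is fully rigorous is a separate issue, but it is the step that does the work; your argument covers only what corresponds to the paper's Case~1, namely partitions with $\varepsilon_\gamma\le 2A(\gamma)$.) To complete your proposal you would need either to strengthen the hypothesis so that the Corollary~\ref{cor:cor1} regime is excluded for all of $\Gamma$ rather than only for $\mathcal{P}$, or to add a separate domination argument for that regime, as the paper does.
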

	
	\begin{proof}
		See Appendix C for details.
	\end{proof}
	
	Theorem~\ref{theorem:th2} indicates that only the partitions in $\mathcal{P}$ need to be considered for computing the privacy $\varepsilon$ in the each sampling processing, which greatly reduce the searching scope. In particular, if condition (4) holds for all partitions in $\mathcal{P}$, the privacy guarantee could be computed directly by Equation (8), which is the first case to consider.
	If not, for any partition $\gamma$ in $\mathcal{P}$ not satisfying condition (4), the privacy guarantee could be computed by Equation (5). Due to the arbitrariness of $\gamma$, we have another $K$ cases to consider since there are $K$ partitions in $\mathcal{P}$. Furthermore, since whether condition (4) holds is unknown, we have to enumerate all these $K+1$ cases to find the privacy guarantee bound. Algorithm 1 presents the searching-based algorithm for monitoring the privacy guarantee of each sampling for each word.
	
	\subsubsection{Privacy monitoring for LDA}
	
	So far, the privacy guarantee $\varepsilon_w^i$ of the sampling process for word $w$ in the $i$th iteration can be measured by Algorithm~\ref{alg:algorithm1}. Since the sampling process of the whole CGS algorithm is iteratively performed for each word but alternatively among all the words in the corpus, the total privacy guarantee of the whole CGS process in LDA training could be computed according to the composition theorems of DP.
	
	\begin{theorem}
		Given a corpus $D$, suppose the CGS algorithm performed on word $w$ at the $i$-th iteration satisfies $\varepsilon_w^i$-DP, then after $n$ iterations, the whole CGS algorithm performed on $D$ satisfies $ \max_{w}\{\sum_{i=1}^{n}\varepsilon_w^i\}$-DP. 
		\begin{proof}
			For any word $w$, after $n$ iterations of sampling, it will be accessed to by the whole CGS process $n$ times, according to the sequential composition theorem~\cite{li2016differential}, the total privacy guarantee for word $w$ in the CGS algorithm is $\varepsilon_w=\sum_{i=1}^{n}\varepsilon_w^i$. While, according to the Equation (1), each iteration of CGS in LDA only accesses to each word once to perform the sampling, then according to the parallel composition theorem~\cite{li2016differential}, the total privacy guarantee for the copus(all words) should be the maximum privacy guarantee of CGS among all words, that is $ \max_{w}\{\sum_{i=1}^{n}\varepsilon_w^i\}$.
		\end{proof}
	\end{theorem}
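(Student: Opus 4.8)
The plan is to decompose the total privacy budget of the whole CGS procedure into contributions indexed by words and then apply the two composition rules of differential privacy in the right order. First I would fix an arbitrary word $w$ in the corpus $D$ and track every elementary sampling step in which the topic label of an occurrence of $w$ is resampled. Over $n$ sweeps each word is touched exactly once per iteration (this is immediate from Equation~(1), where the full conditional for $w_i$ is evaluated once per pass), so there are precisely $n$ such steps, each of which—by the discussion preceding and by Algorithm~\ref{alg:algorithm1}—behaves as an exponential mechanism with privacy parameter $\varepsilon_w^i$. Since these $n$ steps all read (and modify) data associated with $w$, they are composed \emph{sequentially}, and the sequential composition theorem of DP~\cite{li2016differential} yields that the portion of the mechanism acting on $w$ satisfies $\varepsilon_w$-DP with $\varepsilon_w=\sum_{i=1}^{n}\varepsilon_w^i$.

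Next I would argue across words. Within a single iteration the sampling steps for distinct words operate on disjoint ``pieces'' of the data in the sense relevant to neighboring datasets: under the word-level neighboring relation, $D$ and $D'$ differ in exactly one word, so only the chain of sampling steps attached to that one word sees a difference, while all other chains see identical inputs. This is exactly the setting of the \emph{parallel} composition theorem~\cite{li2016differential}, which says that when a dataset is partitioned and a (possibly different) mechanism is run on each block, the overall guarantee is the \emph{maximum} of the per-block guarantees rather than their sum. Treating the per-word chains as the blocks, the whole CGS run therefore satisfies $\max_w \varepsilon_w$-DP, i.e. $\max_w\{\sum_{i=1}^{n}\varepsilon_w^i\}$-DP, which is the claimed bound.

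The step I expect to require the most care is the justification that parallel composition genuinely applies here: the per-word chains are not literally independent computations, because the sampling distribution for $w_i$ in Equation~(1) depends, through the count matrices $N_k^t$ and $N_m^k$, on the current topic assignments of other words. The clean way to handle this is to condition on the realized topic assignments of all words other than $w$ up to the current step: once those are fixed, the sequence of exponential mechanisms applied to $w$ is a well-defined randomized function of $w$'s data alone, and its output distribution differs between $D$ and $D'$ (which agree on everything except the single word $w$) by at most a factor $e^{\varepsilon_w}$; for any other word the conditioned output distribution is literally identical on $D$ and $D'$. Integrating over the conditioning—i.e. composing along the sweep order—then gives the parallel-composition conclusion without needing independence, only the neighboring-dataset structure. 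For the document-level variant the same argument goes through verbatim once a document's at most $N_{max}$ words are treated as the unit, with $\varepsilon_w^i$ replaced by the corresponding per-document per-iteration bound produced by Algorithm~\ref{alg:algorithm1} with $N=N_{max}$.
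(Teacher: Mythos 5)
Your proposal follows the paper's own argument step for step: sequential composition over the $n$ samplings that touch a fixed word $w$ gives $\varepsilon_w=\sum_{i=1}^{n}\varepsilon_w^i$, and parallel composition across words then gives $\max_w\{\sum_{i=1}^{n}\varepsilon_w^i\}$, which is exactly the paper's two-sentence proof. The only place you go beyond the paper is the conditioning argument you offer to legitimize the parallel-composition step, and there your key assertion is not quite right: even after conditioning on the realized topic assignments of all words other than $w$, the sampling distribution of another word $w'\neq w$ is \emph{not} literally identical on $D$ and $D'$, because Equation (1) is evaluated on the aggregate count matrices $N_k^t$ and $N_m^k$, which on $D$ contain the differing word's type--topic and document--topic contributions and on $D'$ do not (most visibly for words of the same type $t$ or in the same document $m$). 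So the per-word ``chains'' are not mechanisms acting on disjoint blocks of the data in the sense the parallel composition theorem requires, and the dependence you correctly worried about is not removed by conditioning. To be fair, the paper's proof has exactly the same soft spot --- it simply asserts that each iteration ``accesses each word once'' and invokes parallel composition --- so your proposal is faithful to the paper; but the airtight version you sketched would need either an argument that the cross-word influence of the differing data is already absorbed into the per-step bounds $\varepsilon_w^i$ of Algorithm 1 (which are computed against the neighboring dataset via the $N_k$ correction in Equation (2)), or a retreat to sequential/adaptive composition over all words, which yields a sum rather than a maximum.
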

	
	Based on this observation, Algorithm~\ref{alg:algorithm2} shows the privacy monitoring algorithm for the whole CGS process in LDA.
	
	\begin{algorithm}[tb]
		\caption{Privacy Monitoring for CGS in LDA}
		\label{alg:algorithm2}
		\textbf{Input}: document corpus \emph{D}\\
		\textbf{Parameter}: iteration number \emph{n}\\
		\textbf{Output}:privacy guarantee $\varepsilon$
		\begin{algorithmic}[1] %[1] enables line numbers
			\WHILE{not finished}
			\FOR{each document \emph{m} in \emph{D}}
			\FOR{each word $w$ in \emph{m}}
			\STATE compute the sampling distribution $\mathbf{p}$
			\STATE  call algorithm~\ref{alg:algorithm1} to compute  $\varepsilon$$_w^i$
			\STATE sample a topic and update matrices N$_k^t$ and N$_m^k$
			\ENDFOR
			\ENDFOR
			\ENDWHILE
			\STATE \textbf{return} $\max_{w}\{\sum_{i=1}^{n}\varepsilon_w^i\}$
		\end{algorithmic}
	\end{algorithm}

	\subsection{LP-LDA}
	As analyzed above, CGS algorithm can intrinsically guarantee the privacy of individual documents for the LDA model trained on a plain-text dataset, which is owned by a trustworthy curator. However, in many distributed applications, data servers are not always privacy-reliable and data owners may not be willing to directly contribute their sensitive data.
	In this case, we further propose a hidden-data based LDA mechanism LP-LDA that can perform the training process on a sanitized dataset with local privacy.  In particular, the LP-LDA mechanism mainly consists of two components: local perturbation at the user side and training on reconstructed dataset at the server side.
	
	\subsubsection{Local perturbation}
	The local perturbation at the user side includes the following steps:
	\begin{itemize}
		\item \textit{Step 1}. Each document $m$ is encoded as a binary vector $\mathbf{V}_m$, in which each bit $\mathbf{V}_m [j]$ represents the presence of the $j$-th word in the word bag of the corpus.
		\item \textit{Step 2}. Each bit $\mathbf{V}_m [j]$ of the binary vector $\mathbf{V}_m$ is then randomly flipped according to the following randomized response rule:
		\begin{align*}
		\hat{\mathbf{V}}_m [i]=
		\begin{cases}
		\mathbf{V}_m [j], ~~& \text{with probability of}~ 1-f\\
		1, ~~& \text{with probability of}~ f/2\\
		0, ~~& \text{with probability of}~ f/2\\
		\end{cases}
		\end{align*}
		where $f \in [0,1]$ is a parameter that specifies the randomness of flipping and adjusts the local privacy level.
		\item \textit{Step 3}. Then the noisy binary vector $\hat{\mathbf{V}}_m [j]$ is sent to the central server by each user. Obviously, $\hat{\mathbf{V}}_m [j]$ is locally sanitized without concerning user's privacy.
	\end{itemize}

	\subsubsection{Training on reconstructed dataset}
	After receiving the flipped binary vectors from a large number of data contributors, the central server can aggregate the vectors, reconstruct the dataset and then perform training on the reconstructed dataset. The rationale behind this is that the training result of topic-word distribution is insensitive to the document partitions and only depends on the total word counts in the corpus.
	\begin{itemize}
		\item \textit{Step 1}. For each bit in the noisy binary vectors, the server counts the number of $1'$s as $n_t=\sum_{i=1}^M \hat{\mathbf{V}}_m [t]$.
		\item \textit{Step 2}. The server then estimates the true count $N_t$ of each bit in the original binary vectors $\mathbf{V}_m$  as $\hat{N}_t=(2n_t-f M)/2(1-f)$.
		\item \textit{Step 3}. For each bit, the server first computes the difference $\delta_t=\hat{N}_t-n_t$.
		\item \textit{Step 4}. For each bit $t$, if $\delta_t>0$, the server randomly samples $\delta_t$ binary vectors with the $t$-th bit as $0$ and sets the $t$-th bit as $1$; if $\delta_t<0$, then the server randomly samples $|\delta_t|$ binary vectors with the $t$-th bit as $1$ and sets the $t$-th bit as $0$; otherwise, keeps the noisy bit vectors as received.
		\item \textit{Step 5}. Based on the noisy bit vectors, the server reconstructs a dataset and performs the CGS process on it.
	\end{itemize}

	\subsubsection{Privacy Analysis of LP-LDA}
	\begin{theorem}
		The LP-LDA satisfies $\varepsilon\text{-differential privacy}$ for each document contributor where $\varepsilon=\ln\frac{1-f/2}{f/2}$.
	\end{theorem}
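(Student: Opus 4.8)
The plan is to strip the analysis down to the single randomized-response step in Step~2 of the local perturbation and then verify the defining inequality of $\varepsilon$-local privacy coordinate by coordinate. First I would observe that everything the central server does in the ``training on reconstructed dataset'' phase --- counting the ones as $n_t$, forming the estimate $\hat N_t$, re-flipping $\delta_t$ bits, and finally running CGS --- is a (possibly randomized) function of the already-sanitized vectors $\hat{\mathbf V}_m$ together with public quantities such as $M$; it never re-reads the raw vectors $\mathbf V_m$. Hence, by the standard post-processing invariance of differential privacy, it suffices to prove the guarantee for the map $\mathbf V_m\mapsto\hat{\mathbf V}_m$, and because this map randomizes each coordinate independently, it is enough to understand one bit.

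For a single bit I would read the two relevant conditional output probabilities directly off the three-way flipping rule, whose branches are mutually exclusive. If the input bit already equals the target output value $v\in\{0,1\}$, then output $v$ arises either by reporting the bit unchanged (probability $1-f$) or through the overwrite-to-$v$ branch (probability $f/2$), so $\Pr[\hat V[j]=v\mid V[j]=v]=(1-f)+f/2=1-f/2$; if the input bit differs from $v$, only the overwrite branch yields $v$, so $\Pr[\hat V[j]=v\mid V[j]=1-v]=f/2$. Both numbers are nonnegative when $f\in[0,1]$, and therefore
\[
\frac{\Pr[\hat V[j]=v\mid V[j]=a]}{\Pr[\hat V[j]=v\mid V[j]=a']}\;\le\;\frac{1-f/2}{f/2}\qquad\text{for all }a,a',v\in\{0,1\},
\]
which is exactly $e^{\varepsilon}$ for $\varepsilon=\ln\frac{1-f/2}{f/2}$.

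It then remains to lift this per-bit bound to the full encoded document. For a document contributor the relevant change is the presence or absence of a single word, so the two encodings $\mathbf V_m$ and $\mathbf V_m'$ differ in exactly one coordinate; using independence of the per-coordinate noise, the likelihood ratio $\Pr[f(t)=t^*]/\Pr[f(t')=t^*]$ factors over coordinates, every agreeing coordinate contributes the factor $1$, and the single disagreeing coordinate contributes at most $e^{\varepsilon}$ by the display above, which gives the claim. I would also record the sanity checks that $\varepsilon\ge 0$ exactly when $f\le 1$ and that $\varepsilon\to 0$ as $f\to 1$. The step that needs the most care --- and the one a referee is most likely to push on --- is precisely this lifting: if $\mathbf V_m$ and $\mathbf V_m'$ were allowed to differ in many coordinates, the product over bits would scale $\varepsilon$ by the number of differing words, so the clean bound $\ln\frac{1-f/2}{f/2}$ rests on the single-word (one-coordinate) neighboring notion being the intended one, and I would make that scope explicit in the statement.
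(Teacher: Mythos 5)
Your proposal is correct and follows essentially the same route as the paper: both arguments reduce to the single-bit randomized-response ratio $\Pr[\hat{\mathbf V}_m[t]=1\mid \mathbf V_m[t]=1]=1-f/2$ versus $\Pr[\hat{\mathbf V}_m[t]=1\mid \mathbf V_m[t]=0]=f/2$, giving $\varepsilon=\ln\frac{1-f/2}{f/2}$, and both dismiss the server-side reconstruction and CGS training by post-processing. The only difference is that you make explicit the lifting from one bit to the whole encoded document under a one-coordinate (single-word) neighboring notion --- a caveat the paper's proof silently skips, since it too only ever examines a single bit while the theorem is phrased at the level of a document contributor --- so your scoping remark is a fair and useful sharpening rather than a divergence in method.
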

	\begin{proof}
		Suppose a word $t$ appears in a noisy bit vector, then the probability of it being kept from the original bit vector is $Pr(\hat{\mathbf{V}}_m [t]=1|\mathbf{V}_m [t]=1)=1-f/2$ and the probability of it being flipped from the original bit vector is $Pr(\hat{\mathbf{V}}_m [t]=1|\mathbf{V}_m [t]=0)=f/2$. Then, according to the definition of DP, it guarantees the privacy of
		\begin{equation*}
		\begin{aligned}
		\varepsilon=\left|\ln\frac{Pr(\hat{\mathbf{V}}_m [t]=1|\mathbf{V}_m [t]=1)}{Pr(\hat{\mathbf{V}}_m [t]=1|\mathbf{V}_m [t]=0)}\right|=\ln\frac{1-f/2}{f/2}.
		\end{aligned}
		\end{equation*}
		The analysis also holds for any bit $t$ that $\hat{\mathbf{V}}_m [t]=0$.
	\end{proof}
	Since the reconstruction and training process are essentially post-processes on the noisy bit vectors, the local privacy remains unchanged for all the documents.
	
	\subsubsection{Utility Analysis of LP-LDA}
	\begin{theorem}
		Let $N_t$ and $n_t$ denote the counts of word $t$ in the original and perturbed datasets, respectively, then
		\begin{equation}
		\hat{N_{t}}=\frac{2n_t-fM}{2(1-f)}
		\end{equation}
		is an unbiased estimator of $N_t$ with the variance of
		\begin{equation}
		D(\hat{N_{t}})=\frac{(2-f)fM}{4(1-f)^2}.
		\end{equation}
	\end{theorem}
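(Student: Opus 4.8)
The plan is to recognize $n_t$ as a sum of $M$ independent Bernoulli trials — one per document contributor — to compute its mean and variance directly, and then propagate these through the affine map that defines $\hat N_t$. Concretely, I would fix a word index $t$ and write $n_t=\sum_{m=1}^{M}\hat{\mathbf{V}}_m[t]$ and $N_t=\sum_{m=1}^{M}\mathbf{V}_m[t]$. From the randomized response rule in Step~2, the ``keep'' branch (probability $1-f$) folds into the conditional probabilities $Pr[\hat{\mathbf{V}}_m[t]=1\mid \mathbf{V}_m[t]=1]=(1-f)+f/2=1-f/2$ and $Pr[\hat{\mathbf{V}}_m[t]=1\mid \mathbf{V}_m[t]=0]=f/2$, which collapse into the single linear identity $\mathbf{E}[\hat{\mathbf{V}}_m[t]]=(1-f)\,\mathbf{V}_m[t]+f/2$, valid whether $\mathbf{V}_m[t]$ equals $0$ or $1$.

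For unbiasedness I would sum this identity over $m$ to obtain $\mathbf{E}[n_t]=(1-f)N_t+fM/2$, substitute into $\hat N_t=(2n_t-fM)/(2(1-f))$, and take expectations: the $fM/2$ terms cancel and the factor $1-f$ divides out, leaving $\mathbf{E}[\hat N_t]=N_t$.

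For the variance I would invoke independence of the perturbations across the $M$ contributors (each user flips its own vector independently), so that $D(n_t)=\sum_{m=1}^{M}D(\hat{\mathbf{V}}_m[t])$. The one mildly subtle observation is that each $\hat{\mathbf{V}}_m[t]$ is Bernoulli with success probability either $1-f/2$ (when $\mathbf{V}_m[t]=1$) or $f/2$ (when $\mathbf{V}_m[t]=0$), and in both cases the Bernoulli variance equals $(f/2)(1-f/2)=f(2-f)/4$; hence $D(n_t)=Mf(2-f)/4$, free of the unknown $N_t$. Since $\hat N_t$ is affine in $n_t$ with slope $1/(1-f)$, I get $D(\hat N_t)=D(n_t)/(1-f)^2=(2-f)fM/(4(1-f)^2)$, as claimed.

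I do not expect a genuine obstacle here; the only points that need care are the bookkeeping that correctly absorbs the probability-$(1-f)$ ``keep'' branch into the two conditional flip probabilities, and the symmetry observation that makes the per-document Bernoulli variance identical in both cases — precisely what yields a clean, data-independent expression. It is also worth noting that $f\in[0,1)$ is required so that $\hat N_t$ is well defined, the value $f=1$ being degenerate.
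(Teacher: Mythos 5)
Your proof is correct and follows essentially the same route as the paper: the paper splits $n_t$ into two binomials $n_1\sim B(N_t,1-f/2)$ and $n_2\sim B(M-N_t,f/2)$ (a moment-estimator framing), which is just the aggregated form of your per-document Bernoulli decomposition, and both arguments yield $\mathbf{E}[n_t]=(1-f)N_t+fM/2$ and $D(n_t)=Mf(2-f)/4$ before propagating through the affine map. Your explicit remark that the Bernoulli variance is $(f/2)(1-f/2)$ in both cases, and the caveat $f\in[0,1)$, are fine additions but do not change the substance.
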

	\begin{proof}
		Let $n_1$ denote the count of word $t$ retained from the real datasets and $n_2$ denote the noisy part, then $n_1$ and $n_2$ follow two Binomial distributions, i.e.,
		$n_{1}\sim B(N_{t}, 1-f/2)$, $n_{2}\sim B(M-N_{t}, f/2)$. Let $ X=n_{1}+n_{2}$, then its  first theoretical moment $E(X)=N_{t}(1-f/2)+(M-N_{t})\cdot(f/2)$ and its first sample moment $\bar{X}= n_{t}$. Therefore,
		\begin{equation*}
		\hat{N_{t}}=\frac{2n_t-fM}{2(1-f)}
		\end{equation*}
		is the moment estimator as well as unbiased estimator. Its variance is then
		\begin{equation*}
		\begin{aligned}
		D(\hat{N_{t}})&=\frac{var(n_t)}{(1-f)^2}=\frac{ var(n_{1}+n_{2})}{(1-f)^2}=\frac{(2-f)fM}{4(1-f)^2}.
		\end{aligned}
		\end{equation*}
	\end{proof}

	\section{Experiment}
	In this section, we evaluate the effectiveness of our proposed privacy monitoring algorithm and locally private LDA algorithm LP-LDA on real-world datasets. 
	
	The datasets used in our experiment are: 
	\textsf{KOS}\footnote{http://archive.ics.uci.edu/ml/}: contains 3430 blog entries from dailykos website.
	\textsf{NIPS}\footnote{http://nips.djvuzone.org/txt.html}: contains 1740 research papers from NIPS conference. 
	\textsf{Enron}\footnote{www.cs.cmu.edu/~enron}: contains 0.5 million email messages from about 150 users.  
	
	We extracted part of these datasets as our training datasets and the rest as the testsets. For simplicity, we setup a pre-processing phase on these dataset before running our experiments. For example, all stop words were removed and 1000 most frequent words in each dataset were chosen as the corresponding vocabulary list. Details about these datasets after pre-processing can be found in Table 1.
	
	In our experiments, for all datasets, the topic number is set as $50$, the maximum iteration number of CGS process in LDA model training is set as $300$, which is sufficient for convergence on all three datasets. The hyper parameters $\alpha$ and $\beta$ are set as 0.1, 0.01, respectively.
	
	\begin{table}
		\centering
		\begin{tabular}{cccc}
			\hline
			Dataset  & $\#.$ words  & $\#.$ training docs & $\#.$ test docs\\
			\hline
			\ KOS       & 209169     &3000 &430 \\
			\ NIPS         &410753       &1349 &150\\
			\ Enron         &356363       &8000 &2000\\
			\hline
		\end{tabular}
		\caption{Details about the real-world datasets}
		\label{tab:plain}
	\end{table}

	\subsection{Inherent privacy of CGS in LDA}
	
	\begin{figure}[htbp]
		\centering
		\subfigure[Document-level privacy]{
			\includegraphics[width=4.0cm]{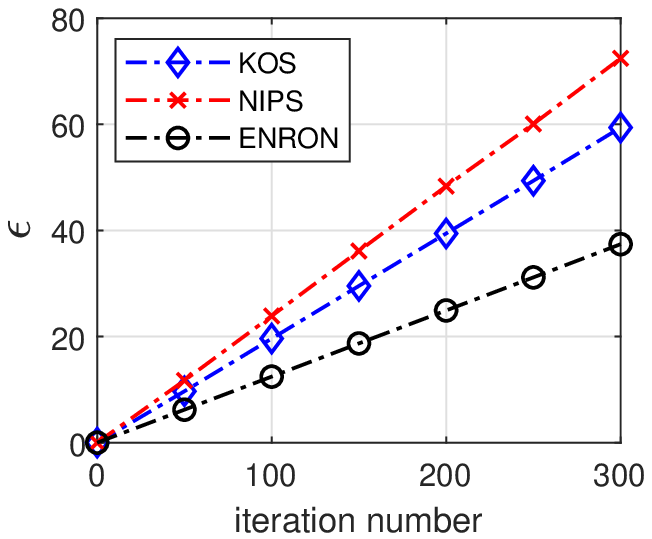}
		}
		\subfigure[Word-level privacy]{
			\includegraphics[width=4.0cm]{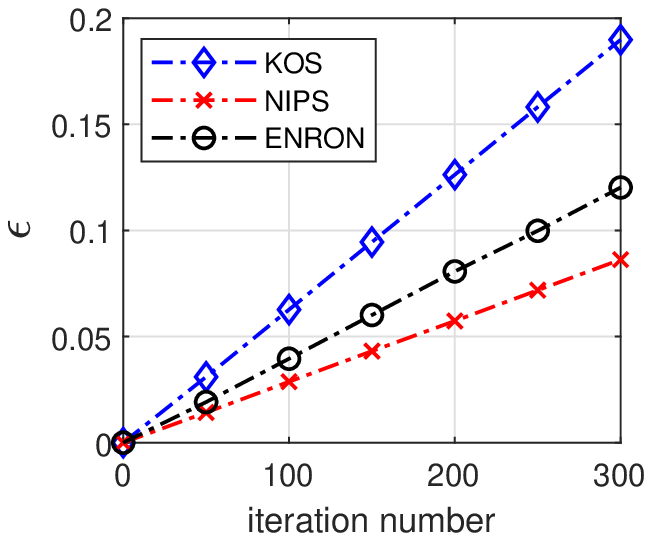}
		}
		
		\caption{privacy guarantee vs. iteration number of CGS in LDA}
		\label{fig:privacy monitoring}
	\end{figure}
	
	\begin{figure*}[htb]
		\centering
		\subfigure[\textsf{KOS}]{
			\includegraphics[width=5.5cm]{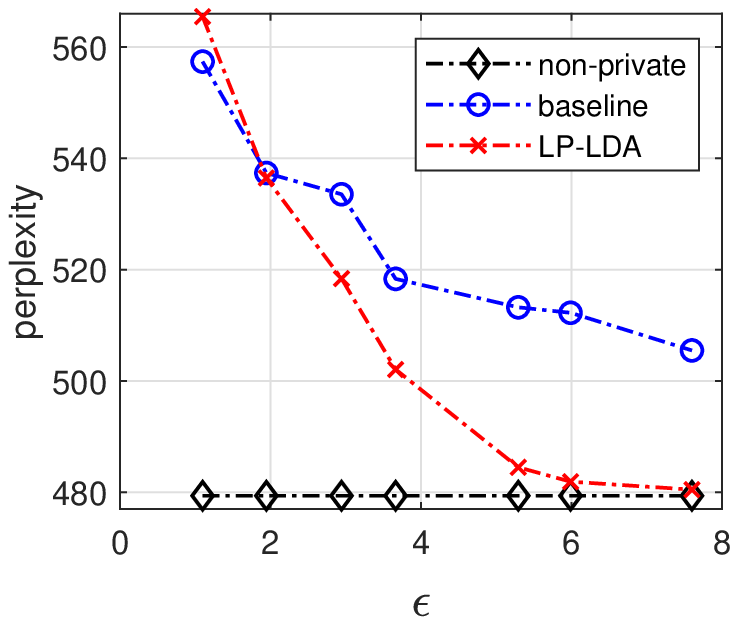}
			\label{benefits for staleness: learning rate=01}
		}
		\subfigure[\textsf{NIPS}]{
			\includegraphics[width=5.5cm]{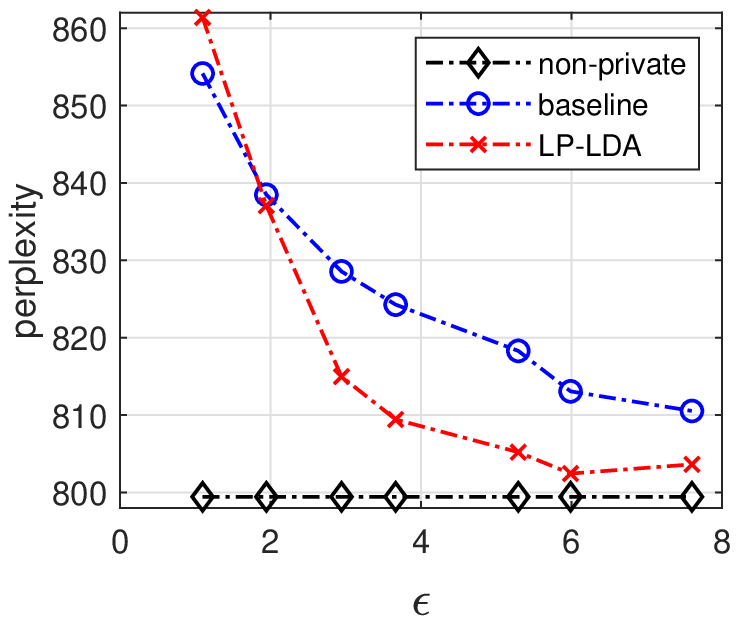}
		}
		\subfigure[\textsf{Enron}]{
			\includegraphics[width=5.5cm]{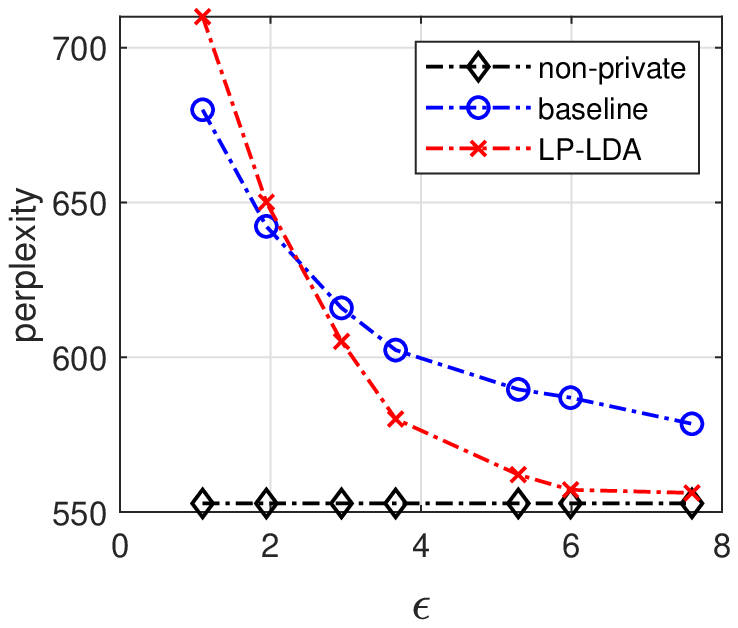}
		}
		\caption{Perplexity of LDA model vs. privacy }
		\label{fig:LP-LDA}
	\end{figure*}

	Figure~\ref{fig:privacy monitoring} illustrates the inherent privacy guarantee of CGS algorithm in LDA measured by our proposed privacy monitoring algorithm on three datasets for both document-level and word-level privacy. It should be noted that a larger privacy parameter $\varepsilon$ in the figures means less privacy guarantee.
	
	As we can see in both subfigures, both word-level and document-level privacy parameter $\varepsilon$ of CGS in LDA increase approximately linearly with the number of sampling iterations. This is because the privacy bound in each iteration of sampling is very close, and the total privacy parameter will accumulate with the number of iterations according to the sequential composition theorem. 
	
	Although CGS on all datasets can obtain privacy guarantee for free, the inherent privacy varies on different datasets. For document level, the privacy guarantee achieved on \textsf{NIPS} is the weakest while that on \textsf{Enron} is the strongest. That is because the documents in \textsf{NIPS} averagely contain the most words, which also means it is the most difficult to be effectively hidden. For word-level privacy, the LDA model trained on \textsf{NIPS} has the strongest privacy guarantee because it contains largest number of words and the sampling probability for each unique word will be the lowest. On the contrary, with the fixed length of vocabulary list, \textsf{KOS} contains the fewest words in total and results the weakest word-level privacy after same number of iteration. 
	
	\subsection{Local mechanism}
	Figure~\ref{fig:LP-LDA} depicts the simulation performance of our proposed LP-LDA mechanism in terms of different level of privacy. The flipping probability $f$ in LP-LDA varies from $0.5$ to $0.001$, and the corresponding privacy level varies from $1.089$ to $7.6004$. The utility of LDA model training is measured by the perplexity on test sets. Perplexity is an information-theoretical measure commonly used to evaluate the prediction performance of LDA model and generally smaller perplexity on a test set means better prediction accuracy. In particular, we compared LP-LDA with a baseline privacy-preserving LDA mechanism based on Laplace mechanism, in which the sufficient statistics of the likelihood, i.e., word count matrices $N_k^t$ and $N_m^k$ are privatized at the beginning of the CGS algorithm with the sensitivity of $1$ and privacy of $\varepsilon$~\cite{foulds2016theory}.

	As shown, both the perplexity of LP-LDA and baseline algorithm decrease with the increase of $\varepsilon$, which shows the trade-off between the privacy and utility. For stronger privacy regime with smaller $\varepsilon$, the perplexity of LP-LDA is larger than that of the baseline algorithm. That is because the Laplace mechanism baseline algorithm incurs less noise than randomized response in LP-LDA for the statistics of word count $N_t$. While for weaker privacy regime with larger $\varepsilon$, the perplexity of LP-LDA is far less than that of the baseline algorithm and shows greater LDA model training utility. These utility comparison results can be also explained by the variance difference of the word count $N_t$ in two mechanisms. In baseline mechanism based on Laplace noise, the noise variance is $D(N'_{t})=2K^{2}/\varepsilon^{2}$, while the variance $D(\hat{N}_t)$ in our proposed LP-LDA is shown in Equation (10). In particular, for larger $\varepsilon$ on all three datasets, we can always have $D({\hat{N}_{t}})<D(N'_{t})$.

	\section{Conclusion and future work}
	In this work, we investigate the privacy protection of LDA model training. We first present that the CGS algorithm in LDA can possess some inherent privacy in each sampling process and then propose a efficient searching-based privacy monitoring algorithm to identify the privacy guarantee bound in the iterative CGS process of LDA. In addition, besides training on a trustworthy data server, we also propose a locally private solution of LP-LDA to achieve LDA training on a sanitized dataset by individual local users, which is applicable to many scenarios. The experiments on real-world datasets validate our proposed approaches. Future work will center on finding tighter bound of the inherent privacy guarantee in LDA model training. 
	
	\section{Acknowledgments}
	
	This work is supported by National Natural Science Foundation of China under Grant 61572398, Grant 61772410, Grant 61802298, Grant 11690011 and Grant U1811461, in part by the Fundamental Research Funds for the Central Universities under Grant xjj2018237, in part by the China Postdoctoral Science Foundation under Grant 2017M623177, and in part by the National Key Research and Development Program of China under Grant 2017YFB1010004.

\bibliographystyle{named}
\bibliography{ijcai19}

\begin{thebibliography}{}

\bibitem[\protect\citeauthoryear{Abadi \bgroup \em et al.\egroup
  }{2016}]{abadi2016deep}
Martin Abadi, Andy Chu, Ian Goodfellow, H~Brendan McMahan, Ilya Mironov, Kunal
  Talwar, and Li~Zhang.
\newblock Deep learning with differential privacy.
\newblock In {\em Proceedings of the 2016 ACM SIGSAC Conference on Computer and
  Communications Security}, pages 308--318. ACM, 2016.

\bibitem[\protect\citeauthoryear{Blei \bgroup \em et al.\egroup
  }{2003}]{blei2003latent}
David~M Blei, Andrew~Y Ng, and Michael~I Jordan.
\newblock Latent dirichlet allocation.
\newblock {\em Journal of machine Learning research}, 3(Jan):993--1022, 2003.

\bibitem[\protect\citeauthoryear{Chaudhuri \bgroup \em et al.\egroup
  }{2011}]{chaudhuri2011differentially}
Kamalika Chaudhuri, Claire Monteleoni, and Anand~D Sarwate.
\newblock Differentially private empirical risk minimization.
\newblock {\em Journal of Machine Learning Research}, 12(Mar):1069--1109, 2011.

\bibitem[\protect\citeauthoryear{Dwork \bgroup \em et al.\egroup
  }{2006}]{dwork2006calibrating}
Cynthia Dwork, Frank McSherry, Kobbi Nissim, and Adam Smith.
\newblock Calibrating noise to sensitivity in private data analysis.
\newblock In {\em Theory of cryptography conference}, pages 265--284. Springer,
  2006.

\bibitem[\protect\citeauthoryear{Dwork \bgroup \em et al.\egroup
  }{2014}]{dwork2014algorithmic}
Cynthia Dwork, Aaron Roth, et~al.
\newblock The algorithmic foundations of differential privacy.
\newblock {\em Foundations and Trends{\textregistered} in Theoretical Computer
  Science}, 9(3--4):211--407, 2014.

\bibitem[\protect\citeauthoryear{Foulds \bgroup \em et al.\egroup
  }{2016}]{foulds2016theory}
James Foulds, Joseph Geumlek, Max Welling, and Kamalika Chaudhuri.
\newblock On the theory and practice of privacy-preserving bayesian data
  analysis.
\newblock {\em arXiv preprint arXiv:1603.07294}, 2016.

\bibitem[\protect\citeauthoryear{Fredrikson \bgroup \em et al.\egroup
  }{2014}]{fredrikson2014privacy}
Matthew Fredrikson, Eric Lantz, Somesh Jha, Simon Lin, David Page, and Thomas
  Ristenpart.
\newblock Privacy in pharmacogenetics: An end-to-end case study of personalized
  warfarin dosing.
\newblock In {\em 23rd $\{$USENIX$\}$ Security Symposium ($\{$USENIX$\}$
  Security 14)}, pages 17--32, 2014.

\bibitem[\protect\citeauthoryear{Heinrich}{2005}]{heinrich2005parameter}
Gregor Heinrich.
\newblock Parameter estimation for text analysis.
\newblock Technical report, Technical report, 2005.

\bibitem[\protect\citeauthoryear{Li \bgroup \em et al.\egroup
  }{2016}]{li2016differential}
Ninghui Li, Min Lyu, Dong Su, and Weining Yang.
\newblock Differential privacy: From theory to practice.
\newblock {\em Synthesis Lectures on Information Security, Privacy, \& Trust},
  8(4):1--138, 2016.

\bibitem[\protect\citeauthoryear{Li \bgroup \em et al.\egroup
  }{2019}]{li2019impact}
Yanan Li, Xuebin Ren, Shusen Yang, and Xinyu Yang.
\newblock Impact of prior knowledge and data correlation on privacy leakage: A
  unified analysis.
\newblock {\em IEEE Transactions on Information Forensics and Security}, 2019.

\bibitem[\protect\citeauthoryear{Park \bgroup \em et al.\egroup
  }{2016}]{park2016variational}
Mijung Park, James Foulds, Kamalika Chaudhuri, and Max Welling.
\newblock Variational bayes in private settings (vips).
\newblock {\em arXiv preprint arXiv:1611.00340}, 2016.

\bibitem[\protect\citeauthoryear{Ren \bgroup \em et al.\egroup
  }{2018}]{ren2018textsf}
Xuebin Ren, Chia-Mu Yu, Weiren Yu, Shusen Yang, Xinyu Yang, Julie~A McCann, and
  S~Yu Philip.
\newblock Lopub: High-dimensional crowdsourced data publication with local
  differential privacy.
\newblock {\em IEEE Transactions on Information Forensics and Security},
  13(9):2151--2166, 2018.

\bibitem[\protect\citeauthoryear{Shokri \bgroup \em et al.\egroup
  }{2017}]{shokri2017membership}
Reza Shokri, Marco Stronati, Congzheng Song, and Vitaly Shmatikov.
\newblock Membership inference attacks against machine learning models.
\newblock In {\em 2017 IEEE Symposium on Security and Privacy (SP)}, pages
  3--18. IEEE, 2017.

\bibitem[\protect\citeauthoryear{Smola and
  Narayanamurthy}{2010}]{smola2010architecture}
Alexander Smola and Shravan Narayanamurthy.
\newblock An architecture for parallel topic models.
\newblock {\em Proceedings of the VLDB Endowment}, 3(1-2):703--710, 2010.

\bibitem[\protect\citeauthoryear{Wang \bgroup \em et al.\egroup
  }{2014}]{wang2014towards}
Yi~Wang, Xuemin Zhao, Zhenlong Sun, Hao Yan, Lifeng Wang, Zhihui Jin, Liubin
  Wang, Yang Gao, Jia Zeng, Qiang Yang, et~al.
\newblock Towards topic modeling for big data.
\newblock {\em arXiv preprint arXiv:1405.4402}, 2014.

\bibitem[\protect\citeauthoryear{Wang \bgroup \em et al.\egroup
  }{2015}]{wang2015privacy}
Yu-Xiang Wang, Stephen~E Fienberg, and Alexander~J Smola.
\newblock Privacy for free: Posterior sampling and stochastic gradient monte
  carlo.
\newblock In {\em ICML}, volume~15, pages 2493--2502, 2015.

\bibitem[\protect\citeauthoryear{Yuan \bgroup \em et al.\egroup
  }{2015}]{yuan2015lightlda}
Jinhui Yuan, Fei Gao, Qirong Ho, Wei Dai, Jinliang Wei, Xun Zheng, Eric~Po
  Xing, Tie-Yan Liu, and Wei-Ying Ma.
\newblock Lightlda: Big topic models on modest computer clusters.
\newblock In {\em Proceedings of the 24th International Conference on World
  Wide Web}, pages 1351--1361. International World Wide Web Conferences
  Steering Committee, 2015.

\bibitem[\protect\citeauthoryear{Yut \bgroup \em et al.\egroup
  }{2017}]{yut2017lda}
Lele Yut, Ce~Zhang, Yingxia Shao, and Bin Cui.
\newblock Lda*: a robust and large-scale topic modeling system.
\newblock {\em Proceedings of the VLDB Endowment}, 10(11):1406--1417, 2017.

\bibitem[\protect\citeauthoryear{Zhu \bgroup \em et al.\egroup
  }{2016}]{zhu2016privacy}
Tianqing Zhu, Gang Li, Wanlei Zhou, Ping Xiong, and Cao Yuan.
\newblock Privacy-preserving topic model for tagging recommender systems.
\newblock {\em Knowledge and information systems}, 46(1):33--58, 2016.

\end{thebibliography}
\begin{appendices}\appendixtitleon
		\appendix
		
		\section{Proof of Theorem 1}\label{proofth1}
		\begin{proof}
		The sensitivity obtained from topic \emph{j} can be computed as
		\begin{equation*}
		\xi_{j}=|\ln\frac{p_{j}^{'}}{p_j}|=|\ln(\frac{\sum_{k}r_{k}^{'}}{\sum_{k}r_k}\cdot\frac{\sum_{t}(n_{j}^{t}+\beta)-N_j}{\sum_{t}(n_{j}^{t}+\beta)})|
		\end{equation*}
		then we compare $\xi_{j}$ and $\xi_{k}$
		\begin{equation*}
		\begin{aligned}
		\xi_{k}-\xi_{j}&=|\ln\frac{\sum_{k}r_{k}^{'}}{\sum_{k}r_{k}}+\ln\frac{\sum_{t}(n_{k}^{t}+\beta)-N_k}{\sum_{t}(n_{k}^{t}+\beta)}|\\&-
		|\ln\frac{\sum_{k}r'_{k}}{\sum_{k}r_{k}}+\ln\frac{\sum_{t}(n_{j}^{t}+\beta)-N_j}{\sum_{t}(n_{j}^{t}+\beta)}|
		\end{aligned}
		\end{equation*}
		By condition (4), it's easy to prove that
		\begin{equation}
		\begin{aligned}
		|\ln\frac{\sum_{k}r_{k}^{'}}{\sum_{k}r_{k}}+\ln\frac{\sum_{t}(n_{j}^{t}+\beta)-N_k}{\sum_{t}(n_{j}^{t}+\beta)}|<\ln\frac{\sum_{k}r_{k}^{'}}{\sum_{k}r_k}
		\end{aligned}
		\end{equation}
		We can observe that $\xi_{k}=\xi_{j}$ when $N_k=N_j=0$, and $\xi_{k}>\xi_{j}$ when $N_k=0,N_{j}\neq 0$ and condition (11) holds. Hence, $\xi_k=\max\{\xi_1\,\xi_2\,...,\xi_K\}$ holds due to the arbitrariness of \emph{k} and \emph{j}.
	\end{proof}
	
	\section{Proof of Lemma 1}\label{prooflemma}
	
	\begin{proof}
		For convenience, we denote
		\begin{equation*}
		\frac{n_{k,\neg i}^{t}+\beta}{\sum_{t=1}^{V}(n_{k,\neg i}^{t}+\beta)-N_k}\cdot\frac{n_{m,\neg i}^{k}+\alpha}{\sum_{k=1}^{K}(n_{m,\neg i}^{k}+\alpha)}\quad by\quad \frac{a_k}{b_k-N_k}
		\end{equation*}
		where
		\begin{equation*}
		a_{k}=(n_{k,\neg i}^{t}+\beta)\cdot\frac{n_{m,\neg i}^{k}+\alpha}{\sum_{k=1}^{K}(n_{m,\neg i}^{k}+\alpha)}\quad b_{k}=\sum_{t=1}^{V}(n_{k,\neg i}^{t}+\beta)
		\end{equation*}
		then the problem is transformed into proving that
		\begin{equation}
		\sum_{j\neq k}\frac{a_j}{b_j}+\frac{a_k}{b_k-N}>\sum_{j=1}^{K}\frac{a_j}{b_j-N_j}
		\end{equation}
		holds if
		\begin{equation*}
		\frac{a_k}{b_k-N}=\max\{\frac{a_j}{b_j-N},j\in\{1,2,...,K\}\}
		\end{equation*}
		inequality (12) is equivalent to
		\begin{equation}
		\frac{a_k}{b_k-N}-\frac{a_k}{b_k}>\sum_{j=1}^{K}(\frac{a_j}{b_j-N_j}-\frac{a_j}{b_j})
		\end{equation}
		To prove inequality (13), we consider a function set
		\begin{equation*}
		\mathcal{Y}=\{y_{j}(x),j\in\{1,2,...,K\}\}
		\end{equation*}
		where
		\begin{equation*}
		y_{j}(x)=\frac{a_j}{b_j-x}-\frac{a_j}{b_j}
		\end{equation*}
		then each function in $\mathcal{Y}$ is determined by a pair of parameters $(a_j,b_j)$. consider the relation between $(a_j,b_j)$ and $(a_i,b_i)$ where $i,j\in\{1,2,...,K\}$, it must belongs to one of two cases below:
		\begin{equation*}
		case1:a_i\geq a_j,b_i\leq b_j\quad case2:a_i<a_j,b_i> b_j
		\end{equation*}
		\emph{Case1}:It must holds that $y_{i}(N_i)+y_{j}(N_j)<y_{i}(N_i)+y_{i}(N_j)<y_{i}(N_i+N_j)$ since $y_{i}(0)=y_{j}(0)$ and $y_{i}^{'}(x)>y_{j}^{'}(x),\forall x>0$\\
		\emph{Case2}:It must holds that
		\begin{equation*}
		y_{i}(N_i)+y_{j}(N_j)<\max\{y_{i}(N_i+N_j),y_{j}(N_j+N_i)\}
		\end{equation*}
		In fact, it is easy to prove that there exists only one intersection in $(0,\min\{b_i,b_j\})$  between $y_{i}(x)\quad and\quad y_{j}(x)$, denoted by $(x^*,y^*)$. Based on this, the distribution of $N_i,N_j,N_i+N_j$ on number axis also has three cases to consider:\\
		
		\textbf{case1:}  $N_j\leq x^*,N_i+N_j>x^*$, by computing the derivatives of $y_{i}^{'}(x)$ and $y_{j}^{'}(x)$, we have
		\begin{equation*}
		y_{i}(N_i)+y_{j}(N_j)<\max\{y_{i}(N_i+N_j),y_{j}(N_j+N_i)\}
		\end{equation*}
		
		\textbf{case2:} $N_{j}> x^{*},N_{i}+N_{j}>x^{*}$, since $y_{j}(N_j)<y_{i}(N_j)$  holds, then:
		\begin{equation*}
		y_{i}(N_i)+y_{j}(N_j)<\max\{y_{i}(N_i+N_j),y_{j}(N_j+N_i)\}
		\end{equation*}
		
		\textbf{case3:}$N_i+N_j\leq x^{*}$, since $y_{i}(N_i)<y_{j}(N_i)$  holds, then:
		\begin{equation*}
		y_{i}(N_i)+y_{j}(N_j)<\max\{y_{i}(N_i+N_j),y_{j}(N_j+N_i)\}
		\end{equation*}
		
		We have proved that there must exists function $y_{i}(x)$ such that $y_i(N)>\sum_{j=1}^{K}y_j(N_j)$ through the research above on the property of $y_{i}(x)$ . So far, Lemma 1 has been proved.
	\end{proof}
	
	\section{Proof of Theorem 2}\label{proofth2}
	
	\begin{proof}
		Given a partition $\gamma^{*}=\{N_1,...N_K\}$ not in $\mathcal{P}$, it suffices to verify that there exist some partitions from $\mathcal{P}$ such that the privacy parameter obtained from $\gamma^{*}$ is smaller than parameter from those partitions.  Assume that the privacy parameter from $\gamma^{*}$ is $\varepsilon=2\xi=2\max\{\xi_1\,\xi_2\,...,\xi_K\}$ ,then there are two cases need to be considered:\\
		\begin{equation*}
		\begin{aligned}
		case1:\xi<\ln\frac{\sum_{k}r_{k}^{'}}{\sum_{k}r_k}\quad case2:\xi>\ln\frac{\sum_{k}r_{k}^{'}}{\sum_{k}r_k}
		\end{aligned}
		\end{equation*}
		\emph{Case 1:}Based on lemma 1, there exists a partition $\gamma^{'}$ from $\mathcal{P}$ such that $\sum_{k}r_{k}^{'}=\max_{\gamma}\{\sum_{k}r_{k}^{'}|\gamma\}$ ,and since there exists $N_j=0$ in $\gamma^{'}$, then according to corollary 1
		\begin{equation*}
		\begin{aligned}
		\varepsilon_{\gamma^{'}}\geq \ln\frac{\sum_{k}r_{k}^{'}}{\sum_{k}r_k}|_{\gamma^{'}}\geq \ln\frac{\sum_{k}r_{k}^{'}}{\sum_{k}r_k}|_{\gamma^{*}}
		\end{aligned}
		\end{equation*}
		\emph{Case 2:}Based on theorem 1, elements in $\{\xi_1\,\xi_2\,...,\xi_K\}$ satisfy
		\begin{equation*}
		\xi_{j}=|\ln(\frac{\sum_{k}r_{k}^{'}}{\sum_{k}r_{k}}\cdot\frac{\sum_{t}(n_{j}^{t}+\beta)-N_j}{\sum_{t}(n_{j}^{t}+\beta)})|
		\end{equation*}
		Since $N_j\ll\sum_{t} n_{j}^{t}$ always holds, especially for a large corpus, then it's not hard to deduce that
		\begin{equation*}
		\xi_{j}<\ln\frac{\sum_{k}r_{k}^{'}}{\sum_{k}r_k}|_{\gamma'}
		\end{equation*}
		until now, we have proved the existence of the $\gamma^{'}$. According to theorem 1, if condition (4) holds for each $k\in\{1,2,...,K\}$ with $N_{k}=N$, then equation (8) will hold directly. So far, theorem 2 has been proved completely.
	\end{proof}
	
\end{appendices}

\end{document}